\definecolor{linkcolor}{RGB}{83,83,182}
\definecolor{citecolor}{RGB}{128,0,128}
\DeclareMathOperator{\logdet}{logdet}
\DeclareMathOperator*{\argmin}{argmin~}
\DeclareMathOperator{\prox}{prox}
\DeclareMathOperator{\Id}{Id}
\DeclareMathOperator{\vectorize}{vec}
\DeclareMathOperator{\sign}{sign}
\newcommand{\norm}[1]{\Vert #1 \Vert}
\newcommand{\abs}[1]{\vert #1 \vert}
\newcommand{\bbR}{\mathbb{R}}
\newcommand{\cC}{\mathcal{C}}
\newcommand{\cL}{\mathcal{L}}
\newcommand{\cS}{\mathcal{S}}
\newcommand{\Thetab}{\boldsymbol{\Theta}}
\newcommand{\Db}{\mathbf{D}}
\newcommand{\Eb}{\mathbf{E}}
\newcommand{\Jb}{\mathbf{J}}
\newcommand{\Kb}{\mathbf{K}}
\newcommand{\Mb}{\mathbf{M}}
\newcommand{\Sb}{\mathbf{S}}
\newcommand{\Xb}{\mathbf{X}}
\newcommand{\Zb}{\mathbf{Z}}
\newcommand{\Lb}{\mathbf{\Lambda}}
\newcommand{\Thetahat}{\hat{\Thetab} (\lambda)}
\newcommand{\dr}{\mathrm{d}}
\newcommand\blfootnote[1]{%
  \begingroup
  \renewcommand\thefootnote{}\footnote{#1}%
  \addtocounter{footnote}{-1}%
  \endgroup
}
\newtheorem{theorem}{Theorem}
\newtheorem{proposition}[theorem]{Proposition}
\newtheorem{assumption}[theorem]{Assumption}
\begin{document}


\titre{Implicit differentiation for hyperparameter tuning\\the weighted Graphical Lasso}

\auteurs{
  \auteur{Can}{Pouliquen}{can.pouliquen@ens-lyon.fr}{}
  \auteur{Paulo}{Gon\c{c}alves}{paulo.goncalves@inria.fr}{}
  \auteur{Mathurin}{Massias}{mathurin.massias@inria.fr}{}
  \auteur{Titouan}{Vayer}{titouan.vayer@inria.fr}{}
}

\affils{
  \affil{}{Univ Lyon, Ens Lyon, UCBL, CNRS, Inria, LIP, F-69342, LYON Cedex 07, France.
  }
}


\resume{Nous dérivons les résultats mathématiques nécessaires à l'implémentation d'une procédure de calibration d'hyperparamètres pour le Graphical Lasso via un problème d'optimisation bi-niveau résolu par méthode du premier-ordre.
  En particulier, nous dérivons la Jacobienne de la solution du Graphical Lasso par rapport à ses hyperparamètres de régularisation.}

\abstract{We provide a framework and algorithm for tuning the hyperparameters of the Graphical Lasso via a bilevel optimization problem solved with a first-order method. In particular, we derive the Jacobian of the Graphical Lasso solution with respect to its regularization hyperparameters.}

\maketitle


\section{Introduction} \label{sec:intro}
The Graphical Lasso estimator (GLASSO) \citep{banerjee2008model} is a commonly employed and established method for estimating sparse precision matrices.
It models conditional dependencies between variables by finding a precision matrix that maximizes the $\ell_1$-penalized log-likelihood of the data under a Gaussian assumption.
More precisely, the GLASSO is defined as\footnote{in variants, the diagonal entries of $\Thetab$ are not penalized. This is handled by the framework of \Cref{sec:matricial_regularization}}
\begin{equation}\label{eq:graphical_lasso}
  \hat\Thetab(\lambda) = \argmin_{\Thetab \succ 0}
  \underbrace{
    -\logdet(\Thetab) + \langle \Sb, \Thetab\rangle + \lambda \norm{\Thetab}_{1}
  }_{=\Phi(\Thetab, \lambda)}
  \, \,,
\end{equation}
where $\Sb = \frac{1}{n} \sum_{i=1}^n \mathbf{x}_i \mathbf{x}_i^\top \in \bbR^{p \times p}$ is the empirical covariance matrix of the data $(\mathbf{x}_1, \cdots, \mathbf{x}_n)$.
There exist many first- and second-order algorithms to solve this problem \citep{friedman2008sparse,hsieh2014quic,oztoprak2012newton}. These approaches all require choosing the right regularization hyperparameter $\lambda$ that controls the sparsity of $\hat\Thetab(\lambda)$.
This is a challenging task that typically involves identifying the value of $\lambda$ for which the estimate $\hat\Thetab(\lambda)$ minimizes a certain performance criterion $\cC$.
This problem can be framed as the bilevel optimization problem
\begin{equation}\label{eq:bilevel}
  \begin{aligned}
    \lambda^{\mathrm{opt}}
     & =  \argmin_\lambda \{ \cL(\lambda) \triangleq \cC(\hat\Thetab(\lambda) ) \}                       \\ 
     & \quad \quad  \text{s.t. } \hat\Thetab(\lambda) = \argmin_{\Thetab \succ 0} \Phi(\Thetab, \lambda)
    \,,
  \end{aligned}
\end{equation}
where the minimizations over $\lambda$ and $\Thetab$ are called respectively the \emph{outer} and \emph{inner} problems. The standard approach to tune the hyperparameter $\lambda$ is  \emph{grid-search}: for a predefined list of values for $\lambda$, the solutions of \eqref{eq:graphical_lasso} are computed and the one minimizing $\cC$ is chosen, which corresponds to solving \eqref{eq:bilevel} with a zero-order method.

In this paper, we propose instead a first-order method for \eqref{eq:bilevel}, relying on the computation of the so-called \emph{hypergradient} and the Jacobian of the GLASSO objective with respect to $\lambda$. Despite the non-smoothness of the inner problem, we derive a closed-form formula for the Jacobian.

Our main contributions are the derivations of the equations of implicit differentiation for the GLASSO: first in the single parameter regularization case for ease of exposure in \Cref{sec:problem_setting}, and then for matrix regularization in \Cref{sec:matricial_regularization}. Our work paves the way for a scalable approach to hyperparameter tuning for the GLASSO and its variants, and could naturally apply to more complex extensions of the GLASSO such as \citep{benfenati2020proximal}.
We provide open-source code for the reproducibility of our experiments which are treated in \Cref{sec:experiments}.
\vspace{-2mm}
\paragraph{Related work}
Although not strictly considering the GLASSO problem, some other alternatives to grid-search have been considered in the literature, including random search \Citep{bergstra2012random} or Bayesian optimization \Citep{snoek2012practical}. 
While we compute the hypergradient by implicit differentiation \Citep{bengio2000gradient}, automatic differentiation in forward and backward modes have also been proposed \Citep{franceschi2017forward}.
\vspace{-2mm}
\paragraph{Notation}
The set of integers from 1 to $k$ is $[k]$.
For a set $\cS \subset [p]$ and a matrix $\mathbf{A} \in \bbR^{p \times p}$, $\mathbf{A}_{:,S}$  (\textit{resp}. $\mathbf{A}_{S,:}$) is the restriction of $\mathbf{A}$ to columns (\textit{resp}. rows) in $\cS$.
The Kronecker and Hadamard products between two matrices are denoted by $\otimes$ and $\odot$ respectively.
The column-wise vectorization operation, transforming matrices into vectors, is denoted by $\vectorize(\cdot)$ and $\vectorize^{-1}(\cdot)$ denotes the inverse operation.
For a differentiable function $F$ of two variables, $\Jb_1 F$ and $\Jb_2 F$ denote the Jacobians of $F$ with respect to its first and second variable respectively.
A fourth-order tensor $\mathbf{A}$ applied to a matrix $\mathbf{B}$ corresponds to a contraction according to the last two indices: $(\mathbf{A} : \mathbf{B})_{ij} = \sum_{k,l} A_{ijkl}B_{kl}$.
The relative interior of a set $\cS$ is denoted by $\mathrm{relint}(\cS)$.

\section{The scalar case} \label{sec:problem_setting}

If the solution of the inner problem $\hat \Thetab(\lambda)$ is differentiable with respect to $\lambda$, the gradient of the outer objective function $\cL$, called \emph{hypergradient}, can be computed by the chain rule:
\begin{equation}
  \frac{\dr \mathcal{L}}{\dr \lambda}(\lambda) = \sum_{i,j=1}^p \frac{\partial \cC}{\partial \Theta_{ij}}(\Thetahat) \frac{\partial \hat{\Theta}_{ij}}{\partial \lambda}(\lambda) \, .
\end{equation}


\blfootnote{This work was partially funded by the AllegroAssai ANR-19-CHIA-0009 project.}

The hypergradient can then be used to solve the bilevel problem with a first-order approach such as gradient descent:
$\lambda_{k+1} = \lambda_k - \rho \frac{\dr \cL}{\dr \lambda}(\lambda_k) $.
The main challenge in the hypergradient evaluation is the computation of $\frac{\partial \hat \Theta_{ij}}{\partial \lambda}(\lambda)$, that we summarize in a $p\times p$ matrix\footnote{$\hat\Jb$ is the image by $\vectorize^{-1}$ of the Jacobian of $\lambda \mapsto \vectorize(\hat\Thetab(\lambda))$} $\hat\Jb= (\frac{\partial \hat \Theta_{ij}}{\partial \lambda}(\lambda))_{ij}$.
When the inner objective $\Phi$ is smooth, $\hat\Jb$ can be computed by differentiating the \emph{optimality condition} of the inner problem, $\nabla_{\Thetab} \Phi(\Thetab, \lambda) = 0$, with respect to $\lambda$, as in \Citep{bengio2000gradient}.

Unfortunately, in our case the inner problem is not smooth.
We however show in the following how to compute $\hat\Jb$ by differentiating a fixed point equation instead of differentiating the optimality condition as performed in \Citep{bertrand2022implicit} for the Lasso.
The main difficulty in our case stems from our optimization variable being a matrix instead of a vector, which induces the manipulation of tensors in the computation of $\hat\Jb$.
Let
\begin{equation}
  \begin{aligned}
    F : \bbR^{p \times p} \times \bbR_+ & \to \bbR^{p \times p}                               \\
    (\Zb, \lambda)                      & \mapsto \prox_{\gamma \lambda \norm{\cdot}_1} (\Zb)
  \end{aligned}
  \,,
\end{equation}
which is equal to the soft-thresholding operator
\begin{equation}\label{eq:st_def}
  F(\Zb, \lambda)=
  \sign(\Zb) \odot (\abs{\Zb} - \lambda \gamma)_+ \, ,
\end{equation}
where all functions apply entry-wise to $\Zb$. When $\Thetahat$ solves the inner problem \eqref{eq:graphical_lasso}, it fulfills a fixed-point equation related to proximal gradient descent.
Valid for any $\gamma>0$ \citep[Prop. 3.1.iii]{combettes2005signal}, this equation is as follows:
\begin{align} \label{eq:fixed_point}
   & \Thetahat = F(\Thetahat - \gamma (\Sb - \Thetahat^{-1}), \lambda) \, .
\end{align}
To compute $\hat\Jb$, the objective is now to differentiate \eqref{eq:fixed_point} with respect to $\lambda$. By defining $\hat{\Zb} \triangleq \Thetahat - \gamma(\Sb - \Thetahat^{-1})$ we will show that $F$ is differentiable at $(\hat \Zb, \lambda)$. Since $F$ performs entry-wise soft-thresholding, each of its coordinates is weakly differentiable \citep[Prop. 1, Eq. 32]{deledalle2014stein} and the only non-differentiable points are when $|\hat Z_{ij}| =  \lambda \gamma$. To ensure that none of the entries of $\hat \Zb$ take the value $\pm\lambda \gamma$, we will use the first-order optimality condition for the inner problem \eqref{eq:graphical_lasso}.{}

\begin{proposition}\label{prop:first_order}
  Let $\hat\Thetab(\lambda)$ be a solution of \eqref{eq:graphical_lasso}.
  Then, using Fermat's rule and the expression of the subdifferential of the $\ell_1$-norm \citep[Thm. 3.63, Ex. 3.41]{beck2017first},
  \begin{equation}\label{eq:first_order}
    [\Thetahat^{-1}]_{ij} - S_{ij} \in
    \begin{cases}
      \{ \lambda \sign \hat{\Theta}(\lambda)_{ij} \} , & \text{if } \,\hat{\Theta}(\lambda)_{ij} \neq 0 \, , \\
      [-\lambda, \lambda]                              & \text{ otherwise.}
    \end{cases}
  \end{equation}
\end{proposition}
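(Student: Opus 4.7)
The statement is the standard Fermat-type optimality condition adapted entry-wise to the GLASSO objective, so the plan is to apply subdifferential calculus to $\Phi(\cdot, \lambda)$ at $\hat\Thetab(\lambda)$ and then specialize the result coordinate by coordinate. I would split $\Phi$ into its smooth part $\Phi_s(\Thetab) = -\logdet(\Thetab) + \langle \Sb, \Thetab\rangle$, defined on the open cone $\{\Thetab \succ 0\}$, and its non-smooth part $\Phi_{ns}(\Thetab) = \lambda \norm{\Thetab}_1$, defined on all of $\bbR^{p \times p}$. This decomposition is exactly the setting in which Fermat's rule combined with the sum rule for subdifferentials applies cleanly.

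First I would observe that any minimizer $\hat\Thetab(\lambda)$ necessarily lies in $\mathrm{relint}(\mathrm{dom}\,\Phi_s) = \{\Thetab \succ 0\}$, because $-\logdet$ acts as a barrier on the boundary of the positive semidefinite cone. On this open set $\Phi_s$ is smooth with $\nabla \Phi_s(\Thetab) = -\Thetab^{-1} + \Sb$. The qualification condition for the sum rule is therefore satisfied, and Fermat's rule yields
\begin{equation*}
  0 \in \nabla \Phi_s(\hat\Thetab(\lambda)) + \lambda \, \partial \norm{\cdot}_1 (\hat\Thetab(\lambda)) \, ,
\end{equation*}
i.e.\ $[\hat\Thetab(\lambda)^{-1}]_{ij} - S_{ij} \in \lambda \, \partial \norm{\cdot}_1 (\hat\Thetab(\lambda))_{ij}$.

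Next I would invoke the separability of $\norm{\cdot}_1$: since it writes as a sum of entry-wise absolute values, its subdifferential is the product of the scalar subdifferentials of $\abs{\cdot}$, which reads $\{\sign(\hat\Theta(\lambda)_{ij})\}$ when $\hat\Theta(\lambda)_{ij} \neq 0$ and $[-1, 1]$ when $\hat\Theta(\lambda)_{ij} = 0$ (this is \citep[Ex.~3.41]{beck2017first}). Multiplying by $\lambda$ and rearranging produces exactly the two cases displayed in \eqref{eq:first_order}.

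The only delicate step is the justification of the sum rule, but as noted above it reduces to checking that $\hat\Thetab(\lambda)$ lies in the (relative) interior of the domain of $\Phi_s$, which is automatic from the logarithmic barrier; everything else is a direct application of standard results and a coordinate-wise reading of the subdifferential of the $\ell_1$-norm.
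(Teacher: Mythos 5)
Your proof is correct and follows exactly the route the paper intends: the paper gives no separate proof beyond citing Fermat's rule and the entry-wise subdifferential of the $\ell_1$-norm, and your argument is simply a careful expansion of that one-line justification (decomposing $\Phi$ into smooth and non-smooth parts, noting the minimizer lies in the open cone where $-\logdet$ is differentiable with gradient $-\Thetab^{-1}$, and reading off the separable subdifferential coordinate-wise). The extra care you take with the sum-rule qualification via the barrier property of $-\logdet$ is a welcome detail but does not constitute a different approach.
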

We also require the following assumption which is classical (see \textit{e.g.} \citep[Thm 3.1]{liang2014local} and references therein).
\begin{assumption}[Non degeneracy] \label{ass:non-degeneracy}
  We assume that the inner problem is non-degenerated, meaning that it satisfies a slightly stronger condition than \eqref{eq:first_order}:
  \begin{equation}
    \Thetahat^{-1} - \Sb \in \mathrm{relint}\, \lambda \partial \norm{\cdot}_{1} \,.
  \end{equation}
  This implies that in \eqref{eq:first_order}, the interval $[-\lambda, \lambda]$ in the second case becomes $\left(-\lambda, \lambda\right)$.
\end{assumption}
Using \Cref{prop:first_order} under \Cref{ass:non-degeneracy}, we conclude that $|\hat{Z}_{ij}|$ never takes the value $\lambda \gamma$ so that \eqref{eq:fixed_point} is differentiable.
Indeed when $\hat{\Theta}(\lambda)_{ij} = 0, |[\Thetahat^{-1}]_{ij} - S_{ij}| <\lambda$ which implies $|\hat{Z}_{ij}| < \lambda \gamma$.
Conversely, when $\hat{\Theta}(\lambda)_{ij} \neq 0$, $\hat{Z}_{ij} = \hat{\Theta}(\lambda)_{ij} + \gamma \lambda \sign(\hat{\Theta}(\lambda)_{ij})$ which implies $|\hat{Z}_{ij}| > \lambda \gamma$.
Consequently, we can differentiate \Cref{eq:fixed_point} \textit{w.r.t.} $\lambda$, yielding
\begin{equation} \label{eq:jacobian_fixed_point}
  \hat\Jb =
  \Jb_1 F(\hat \Zb, \lambda)
  \left( \hat\Jb - \gamma \Thetahat^{-1} \hat\Jb \Thetahat^{-1} \right)
  + \Jb_2 F(\hat \Zb, \lambda) \,.
\end{equation}
The goal is now to solve \eqref{eq:jacobian_fixed_point} in $\hat \Jb$. We define $\Db \triangleq \Jb_1 F(\hat \Zb, \lambda)$ the Jacobian of $F$ with respect to its first variable at $(\hat \Zb, \lambda)$ which is represented by a fourth-order tensor in $\bbR^{p \times p \times p \times p}$:
\begin{align}
  D_{ijkl} = \left[\frac{\partial F}{\partial Z_{kl}}(\hat \Zb, \lambda)\right]_{ij}\,.
\end{align}
We also note $\Eb\triangleq \Jb_2 F(\hat \Zb, \lambda)$ viewed as a $p \times p$ matrix.

\vspace{-3mm}

\paragraph{Jacobian with respect to $Z$}
Because the soft-thresholding operator acts independently on entries, one has $D_{ijkl} = 0$ when $(i, j) \neq (k, l)$.
From \Cref{eq:st_def}, the remaining entries are given by
\begin{align}\label{eq:D_expression}
  D_{ijij} =
  \begin{cases}
    0 \, , & \text{if } \abs{\hat{Z}_{ij}} < \lambda \gamma \: \, , \\
    1 \, , & \text{otherwise.}
  \end{cases}
\end{align}

\vspace{-3mm}
\paragraph{Jacobian with respect to $\lambda$}
Similarly to $\Db$, $\Eb$ is given by
\begin{align}\label{eq:E_expression}
  E_{ij} = \left[\frac{\partial F}{\partial \lambda}(\hat\Zb, \lambda)\right]_{ij} =
  \begin{cases}
    0      \, ,                      & \text{if } \abs{\hat{Z}_{ij}} < \lambda \gamma \, , \\
    -\gamma \sign(\hat{Z}_{ij}) \, , & \text{otherwise.}
  \end{cases}
\end{align}
We can now find the expression of $\hat{\Jb}$ as described in the next proposition.


\begin{proposition}
  \label{prop:jacscalar}
  Let $\cS \subset[p^2]$ be the set of indices $i$ such that $\vectorize(|\hat \Zb|)_i > \lambda \gamma$.
  The Jacobian $\hat\Jb$ is given by
  \begin{equation*}
    \begin{aligned}
       & \vectorize\left(\hat\Jb\right)_\cS
      = \left[ \left( \Thetahat^{-1} \otimes \Thetahat^{-1} \right)_{\cS,\cS} \right]^{-1} \vectorize\left( \Eb \right)_\cS / \gamma \, , \\
       & \vectorize\left(\hat\Jb\right)_{\cS^c} = 0                                                                                       
      \, .
    \end{aligned}
  \end{equation*}
\end{proposition}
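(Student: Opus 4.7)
The plan is to solve the linear equation \eqref{eq:jacobian_fixed_point} for $\hat\Jb$ by exploiting the diagonal structure of $\Db$ and the sparsity of $\Eb$. Since $\Db$ is nonzero only when its four indices agree in pairs and its nonzero entries are all equal to $1$ by \eqref{eq:D_expression}, the contraction $\Db:\mathbf{M}$ acts on any matrix $\mathbf{M}$ as an entrywise mask: $(\Db:\mathbf{M})_{ij}=M_{ij}$ if $(i,j)\in\cS$ (after vectorization) and $0$ otherwise. The same mask supports $\Eb$ by \eqref{eq:E_expression}.

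First I would read off \eqref{eq:jacobian_fixed_point} entrywise. For indices $(i,j)$ corresponding to $\cS^c$, the right-hand side vanishes, so $\hat J_{ij}=0$; this gives the second line of the claim. For $(i,j)$ corresponding to $\cS$, the mask is trivial and the equation becomes
\begin{equation*}
  \hat J_{ij} = \hat J_{ij} - \gamma\bigl[\Thetahat^{-1}\hat\Jb\,\Thetahat^{-1}\bigr]_{ij} + E_{ij},
\end{equation*}
so that $\hat J_{ij}$ cancels on both sides, yielding the reduced linear system
\begin{equation*}
  \gamma\bigl[\Thetahat^{-1}\hat\Jb\,\Thetahat^{-1}\bigr]_{ij} = E_{ij},\qquad (i,j)\in\cS.
\end{equation*}

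Next I would vectorize this system using the identity $\vectorize(ABC)=(C^\top\otimes A)\vectorize(B)$ together with the symmetry of $\Thetahat^{-1}$, which gives
\begin{equation*}
  \gamma\bigl[(\Thetahat^{-1}\otimes \Thetahat^{-1})\vectorize(\hat\Jb)\bigr]_\cS = \vectorize(\Eb)_\cS.
\end{equation*}
Because $\vectorize(\hat\Jb)_{\cS^c}=0$ from the first step, only the columns of the Kronecker product indexed by $\cS$ contribute, so the left-hand side equals $\gamma(\Thetahat^{-1}\otimes\Thetahat^{-1})_{\cS,\cS}\,\vectorize(\hat\Jb)_\cS$. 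Inverting this submatrix yields exactly the stated formula.

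The only point requiring justification is the invertibility of $(\Thetahat^{-1}\otimes\Thetahat^{-1})_{\cS,\cS}$. Since $\Thetahat\succ 0$, one has $\Thetahat^{-1}\succ 0$, and the eigenvalues of $\Thetahat^{-1}\otimes\Thetahat^{-1}$ are products of pairs of positive eigenvalues of $\Thetahat^{-1}$, so $\Thetahat^{-1}\otimes\Thetahat^{-1}$ is symmetric positive definite. Any principal submatrix of an SPD matrix is SPD and hence invertible. I do not anticipate a substantive obstacle in this proof: the argument is essentially a careful bookkeeping of which entries are active, and the main delicate point is the legitimacy of extracting the $(\cS,\cS)$-block, which is enabled by having already established $\hat\Jb_{\cS^c}=0$ in the first step.
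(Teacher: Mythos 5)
Your proof is correct and follows essentially the same route as the paper's: identify $\Db$ as a masking operator supported on $\cS$, observe that $\Eb$ shares this support so that $\hat\Jb$ vanishes on $\cS^c$ and the $\hat\Jb$ terms cancel on $\cS$, then vectorize with the Kronecker identity and restrict to the $(\cS,\cS)$ block. The one genuine addition is your justification that $(\Thetahat^{-1}\otimes\Thetahat^{-1})_{\cS,\cS}$ is invertible as a principal submatrix of an SPD matrix, a point the paper leaves implicit.
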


\begin{proof}
  From \eqref{eq:D_expression}, one has that $\Db$ applied to $\Xb \in \bbR^{p \times p}$ is simply a masking operator $\Db : \Xb = \Mb \odot \Xb$,
  where $M_{ij} = \mathbbm{1} \{|\hat Z_{ij}| < \lambda \gamma\}$.
  Thus \eqref{eq:jacobian_fixed_point} reads
  
  \begin{equation}\label{eq:full_eq_jac}
    \begin{aligned}
      \hat{\Jb} = \Mb \odot \left( \hat{\Jb} - \gamma \Thetahat^{-1} \hat{\Jb} \Thetahat^{-1} \right) +  \Eb \, .
    \end{aligned}
  \end{equation}
  Now by the expression of $\Eb$ \eqref{eq:E_expression}, $\Eb$ has the same support as $\Mb$, so $\Eb = \Mb \odot \Eb$, so $\hat\Jb = \Mb \odot \hat\Jb$, and \eqref{eq:full_eq_jac} simplifies to
  \begin{align}\label{eq:masked_eq_jac}
    \Mb \odot (\Thetahat^{-1} \hat{\Jb} \Thetahat^{-1}) = \Eb / \gamma \, .
  \end{align}
  Using the mixed Kronecker matrix-vector product property $\vectorize(\mathbf{A} \mathbf{C} \mathbf{B}^\top) = (\mathbf{A} \otimes \mathbf{B}) \vectorize(\mathbf{C})$, by vectorizing \eqref{eq:masked_eq_jac}, we get
  \begin{align}\label{eq:all_vectorized}
    \vectorize(\Mb) \odot (\Thetahat^{-1} \otimes \Thetahat^{-1}) \vectorize(\hat{\Jb})
    = \vectorize(\Eb) / \gamma \, .
  \end{align}
  Writing $\mathbf{K} = \Thetahat^{-1} \otimes \Thetahat^{-1}$, we have $\Kb \vectorize \hat{\Jb} =  \Kb_{:,\cS} (\vectorize \hat \Jb)_\cS$ because $\vectorize(\hat\Jb)$ is $0$ outside of $\cS$.
  Then, \eqref{eq:all_vectorized} can be restricted to entries in $\cS$, yielding
  $\Kb_{\cS,\cS} (\vectorize \hat\Jb)_{\cS} = (\vectorize \Eb)_\cS$, which concludes the proof.
\end{proof}

\vspace{-3mm}
\section{Matrix of hyperparameters}
\label{sec:matricial_regularization}
In the vein of \cite{van2019generalized}, we now consider the \emph{weighted} GLASSO where the penalty is controlled by a matrix of hyperparameters $\Lb \in \bbR^{p \times p}$. 
In the weighted GLASSO, $\lambda \norm{\Thetab}_{1}$ is replaced by
\begin{equation}
  \norm{\Lb \odot \Thetab}_1 = \sum_{k, l} \Lambda_{kl} \abs{\Theta_{kl}} \, ,
\end{equation}
with $\Lb = (\Lambda_{kl})_{k,l \in [p]}$.
Due to its exponential cost in the number of hyperparameters, grid search can no longer be envisioned.
In this setting, a notable difference with the scalar hyperparameter case is the dimensionality of the terms.
Indeed, the hypergradient $\nabla \cL(\Lb)$ is now represented by a $p \times p$ matrix,
while $\Db$, $\Eb$ and $\hat\Jb$ will be represented by fourth-order tensors in $\bbR^{p \times p \times p \times p}$. 
For simplicity, we compute each element of the matrix $\nabla \cL(\Lb)$ individually as
\begin{align}
  [\nabla \cL(\Lb)]_{kl} = \sum_{i,j=1}^p \frac{\partial \cC}{\partial \Theta_{ij}}(\hat\Thetab(\Lb)) \frac{\partial \hat \Theta_{ij}}{\partial \Lambda_{kl}}(\Lambda_{kl}) \in \bbR\,.
\end{align}
In the matrix case, the function $F$ becomes $F(\Zb, \Lb)
  = \sign(\Zb) \odot (\abs{\Zb} - \gamma \Lb)_+$.
By differentiating the fixed point equation of proximal gradient descent,
\begin{align} \label{eq:fixed_point_matrix}
   & \hat\Thetab(\Lb) =
  F(\underbrace{\hat\Thetab(\Lb) - \gamma (\Sb - \hat\Thetab(\Lb)^{-1})}_{\hat \Zb}, \Lb) \, ,
\end{align}
with respect to $\Lambda_{kl}$, we obtain a Jacobian that can be expressed by a $p \times p$ matrix $[\hat \Jb_{(\Lambda_{kl})}]_{ij} = \frac{\partial \hat \Theta_{ij}}{\partial \Lambda_{kl}}(\Lb)$. It satisfies
\begin{equation*}
  \begin{aligned}
    \hat\Jb_{(\Lambda_{kl})} & =
    \Db:
    \left(\hat\Jb_{(\Lambda_{kl})}
    - \gamma \hat{\Thetab}(\Lb)^{-1}
    \hat\Jb_{(\Lambda_{kl})}
    \hat{\Thetab}(\Lb)^{-1} \right)+ \Eb_{(\Lambda_{kl})}\,. \\
  \end{aligned}
\end{equation*}
Similarly to the scalar case $D_{ijkl} = \mathbbm{1}_{(i,j) = (k,l)} \mathbbm{1}_{|\hat{Z}_{ij}| > \gamma \Lambda_{kl}}$ and $[\Eb_{(\Lambda_{kl})}]_{ij} = -\sign(\hat{Z}_{kl}) \mathbbm{1}_{(i,j) = (k,l)} \mathbbm{1}_{|\hat{Z}_{ij}| > \gamma \Lambda_{kl}}$. The following proposition thus gives the formula for $\hat\Jb_{(\Lambda_{kl})}$.


\begin{proposition} \label{prop:jacob_Lambda}
  Let $\cS \subset[p^2]$ be the set of indices $i$ such that $\vectorize(|\hat \Zb|)_i > \gamma \vectorize(\Lb)_i $.
  The Jacobian $\hat\Jb_{(\Lambda_{kl})}$ is given by
  \begin{equation*}
    \begin{aligned}
       & \vectorize\left(\hat\Jb_{(\Lambda_{kl})}\right)_\cS
      \hspace*{-2mm}= \hspace*{-1mm} \left[ \left( \hat{\Thetab}(\Lb)^{-1} \otimes \hat{\Thetab}(\Lb)^{-1} \right)_{\cS,\cS} \right]^{-1} \hspace*{-4mm} \vectorize\left( \Eb_{(\Lambda_{kl})} \right)_\cS / \gamma, \\
       & \vectorize\left(\hat\Jb_{(\Lambda_{kl})}\right)_{\cS^c} = 0                                                                                                                                                 
      \, .
    \end{aligned}
  \end{equation*}
  The Jacobian of $\hat\Thetab(\Lb)$ with respect to $\Lb$ can be represented by the $\bbR^{p \times p \times p \times p}$ tensor $\hat\Jb$ where $\hat J_{ijkl} = \left([\hat\Jb_{(\Lambda_{kl})}]_{ij}\right)_{i,j,k,l}$
\end{proposition}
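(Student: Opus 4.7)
The plan is to mirror the proof of \Cref{prop:jacscalar} slice by slice: for each pair $(k,l)$ I would differentiate the fixed-point equation \eqref{eq:fixed_point_matrix} with respect to the single scalar $\Lambda_{kl}$, thereby obtaining the tensorial equation for $\hat\Jb_{(\Lambda_{kl})}$ already recorded in the paragraph preceding the proposition. Before writing the derivative, I would invoke a non-degeneracy assumption analogous to \Cref{ass:non-degeneracy} but phrased for the weighted $\ell_1$-norm, guaranteeing that no entry of $\hat{\Zb}$ lies on the singular set $\{|\hat Z_{ij}| = \gamma \Lambda_{ij}\}$; the same short case analysis used in the scalar setting (splitting on whether $\hat{\Theta}_{ij} = 0$ or not) still applies entrywise since the soft-thresholding in $F$ now acts with threshold $\gamma \Lambda_{ij}$ on entry $(i,j)$.

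Next, I would exploit the explicit forms of $\Db$ and $\Eb_{(\Lambda_{kl})}$ stated just above the proposition: $\Db$ acts as the Hadamard mask $\Db : \Xb = \Mb \odot \Xb$ with $M_{ij} = \mathbbm{1}_{|\hat Z_{ij}| > \gamma \Lambda_{ij}}$, while $\Eb_{(\Lambda_{kl})}$ vanishes outside the single index $(k,l)$ and also vanishes there unless $(k,l)\in\cS$. In particular the support of $\Eb_{(\Lambda_{kl})}$ is contained in $\cS$. Plugging into the tensorial equation then forces $\hat\Jb_{(\Lambda_{kl})} = \Mb \odot \hat\Jb_{(\Lambda_{kl})}$, which collapses it to
\begin{equation*}
\Mb \odot \bigl(\hat{\Thetab}(\Lb)^{-1} \hat\Jb_{(\Lambda_{kl})} \hat{\Thetab}(\Lb)^{-1}\bigr) = \Eb_{(\Lambda_{kl})} / \gamma,
\end{equation*}
exactly paralleling \eqref{eq:masked_eq_jac}.

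From here the argument becomes routine and directly imitates the scalar case: applying $\vectorize$ together with the Kronecker identity $\vectorize(\mathbf{A} \mathbf{C} \mathbf{B}^\top) = (\mathbf{A} \otimes \mathbf{B}) \vectorize(\mathbf{C})$ and then restricting both sides to entries in $\cS$ (using that $\vectorize(\hat\Jb_{(\Lambda_{kl})})$ is zero on $\cS^c$) yields the square linear system whose inversion gives the announced formula on $\cS$, while the $\cS^c$ entries are zero by construction. Assembling the resulting $p\times p$ matrices $\hat\Jb_{(\Lambda_{kl})}$ across $(k,l)$ then produces the fourth-order tensor $\hat\Jb$.

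The only nontrivial step, as in \Cref{prop:jacscalar}, is justifying that the principal submatrix $(\hat{\Thetab}(\Lb)^{-1} \otimes \hat{\Thetab}(\Lb)^{-1})_{\cS,\cS}$ is invertible. This is where I would have to be careful, but it follows at once from $\hat{\Thetab}(\Lb) \succ 0$: the full Kronecker product is symmetric positive definite, and any principal submatrix inherits that property. Beyond this, everything is bookkeeping — the scalar $\lambda$ of the scalar case is replaced by the entry $\Lambda_{ij}$ at the thresholding stage, and the support set $\cS$ is defined pointwise against $\gamma \Lb$ rather than against a scalar $\gamma \lambda$.
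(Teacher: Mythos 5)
Your proposal is correct and follows essentially the same route as the paper, which proves the scalar case (\Cref{prop:jacscalar}) in detail and obtains \Cref{prop:jacob_Lambda} by exactly the slice-by-slice analogy you carry out (masking by $\Db$, support of $\Eb_{(\Lambda_{kl})}$ contained in $\cS$, vectorization via the Kronecker identity, restriction to $\cS$). Your explicit justification that the principal submatrix $(\hat{\Thetab}(\Lb)^{-1} \otimes \hat{\Thetab}(\Lb)^{-1})_{\cS,\cS}$ is invertible because it is a principal submatrix of a symmetric positive definite matrix is a welcome addition that the paper leaves implicit.
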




We notice that the inverse of the Kronecker product, the bottleneck in the computation of $\hat\Jb$, only has
to be computed once for all $(\Lambda_{kl})_{k,l \in [p]}$. By its expression, $\Eb_{(\Lambda_{kl})}$ is a matrix with a single $\pm1$ element at index $(i,j) = (k,l)$. Therefore $\hat\Jb_{(\Lambda_{kl})}$ is obtained by extracting the only column of $\left[ \left( \hat{\Thetab}(\Lb)^{-1} \otimes \hat{\Thetab}(\Lb)^{-1} \right)_{\cS,\cS} \right]^{-1}$ indexed by that non-zero element.
\vspace{-2mm}
\section{Experiments}\label{sec:experiments}

In this section, we present our proposed methodology for tuning the hyperparameter(s) of the GLASSO, and we aim to address the following three questions through our experiments: 1) How does our approach compare to grid-search? 2) What level of improvement can be achieved by extending to matrix regularization? 3) What are the limitations of our method in its current state?

To answer these questions, we generated synthetic data using the \texttt{make\_sparse\_spd\_matrix} function of \texttt{scikit-learn}, which created a random $100 \times 100$ sparse and positive definite matrix $\Thetab_\mathrm{true}$ by imposing sparsity on its Cholesky factor. We then sample $2000$ points following a Normal distribution $\mathbf{x}_i \sim \mathcal{N}\left(0, \Thetab_\mathrm{true}^{-1}\right), i\in [n]$ \textit{i.i.d.}
\vspace{-3mm}
\paragraph{The criterion and its gradient} Selecting the appropriate criterion $\cC$ to minimize is not an easy task without strong prior knowledge of the true matrix $\Thetab_{\mathrm{true}}$ to be estimated.
In our numerical validation, we use the unpenalized negative likelihood on left-out data.
More precisely, we split the data into a training and testing set with a $50-50$ ratio $(\mathbf{x}_i)_{i \in [n]} = (\mathbf{x}_i)_{i \in I_{\mathrm{train}}} \cup (\mathbf{x}_i)_{i \in I_{\mathrm{test}}}$ and we consider the hold-out criterion
$\cC(\Thetab) = -\logdet(\Thetab) + \langle \Sb_\mathrm{test}, \Thetab\rangle$ where $\Sb_\mathrm{test}= \frac{1}{|I_{\mathrm{test}}|} \sum_{i \in I_{\mathrm{test}}} \mathbf{x}_i \mathbf{x}_i^{\top}$ is the empirical covariance of the test samples (respectively $\Sb_\mathrm{train}$ for the train set).
This corresponds to the negative log-likelihood of the test data under the Gaussian assumption $\forall i \in I_{\mathrm{test}}, \ \mathbf{x}_i \sim \mathcal{N}(0, \Thetab^{-1})$ \textit{i.i.d} \citep{friedman2008sparse}. The intuition behind the use of this criterion is that $\Thetahat$ should solve the GLASSO problem on the training set while remaining plausible on the test set.
Other possible choices include reconstruction errors such as $\cC(\Thetab) = \norm{\Thetab \Sb_\mathrm{test} - \Id}_F$, but a comparison of the effect of the criterion on the solution is beyond the scope of this paper.
In our case, the criterion's gradient $\nabla \cC(\Thetab)$ is then equal to
$-\Thetab^{-1} + \Sb_\mathrm{test}$ \citep[\S A.4.1]{boyd2004convex}.

\vspace{-3mm}
\paragraph{Computing the Jacobian} Based on the previous results we have all the elements at hand to compute the hypergradient for scalar and matrix hyperparameters.
In the first case it reads $\frac{\dr \mathcal{L}}{\dr \lambda}(\lambda) = \langle\hat\Jb, \nabla\cC(\hat\Thetab(\lambda)) \rangle$ with $\hat\Jb$ as in \Cref{prop:jacscalar}, while in the latter case it can be computed with the double contraction $\nabla \mathcal{L}(\Lb) = \hat\Jb : \nabla\cC(\hat\Thetab(\Lb))$ with $\hat\Jb$ as in \Cref{prop:jacob_Lambda}.
In the code, we use the parametrization $\lambda = \exp(\alpha)$ and $\Lambda_{kl} = \exp(\alpha_{kl})$ respectively for the scalar and matrix regularization, and optimize over $\alpha$ in order to impose the positivity constraint on $\lambda$, as in \citep{bertrand2022implicit}.
We rely on the GLASSO solver \citep{laska_narayan_2017_830033} for computing $\hat\Thetab(\cdot)$.
For solving \eqref{eq:bilevel}, we use simple gradient descent with fixed step-size $\rho = 0.1$.





\vspace{-3mm}
\paragraph{Comparison with grid-search}
As a sanity check, we first compare our method with a single hyperparameter (scalar case) to grid search. The initial regularization parameter $\lambda^\mathrm{init}$ is chosen such that the estimated precision matrix $\widehat\Thetab(\lambda^\mathrm{init})$ is a diagonal matrix:
$\lambda^\mathrm{init} = \log(\|\Sb_{\operatorname{train}}\|_{\infty})$.
\Cref{fig:scalar_grad_vs_grid} demonstrates that both methods find the same optimal $\lambda$, which we refer to as $\lambda^\mathrm{opt}_\mathrm{id}$, and that a first-order method that is suitably tuned can swiftly converge to this optimum. We also compute in the same Figure  the relative error (RE) $\frac{\norm{\Thetab_\mathrm{true} - \widehat\Thetab(\lambda)}}{\norm{\Thetab_\mathrm{true}}}$ between the estimation and the true matrix (in blue). We notice that $\hat \Thetab(\lambda^\mathrm{opt}_\mathrm{id})$ results in a slightly worse RE than the optimal one. This highlights the importance of the choice of $\cC$, which may not necessarily reflect the ability to precisely reconstruct the true precision matrix $\Thetab_\mathrm{true}$. Nonetheless, it is important to note that the RE represents an oracle error since, in practical scenarios, we do not have access to $\Thetab_\mathrm{true}$. This raises the essential question of criterion selection, which we defer to future research.

\begin{figure}[t]
  \centering
  \includegraphics[width=\linewidth]{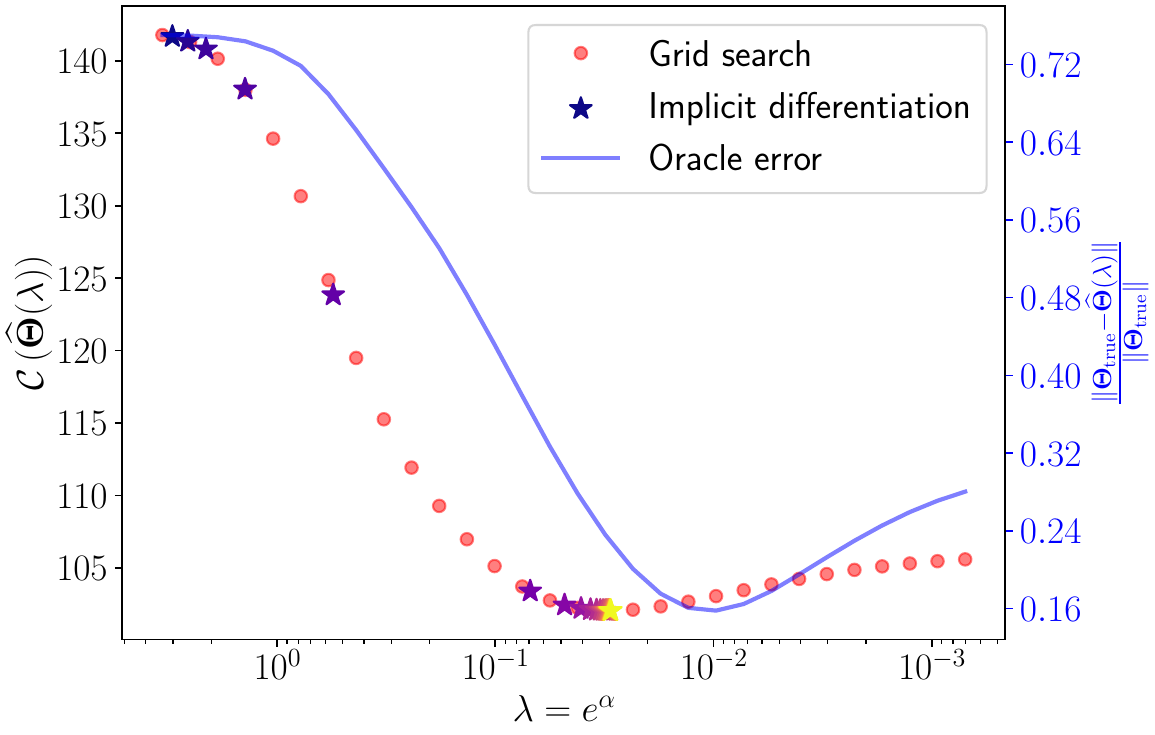}
  \vspace*{-3mm}
  \caption{Value of the criterion $\cC$ \textit{w.r.t.} $\lambda$ for grid-search and our method, along with the oracle RE.}
  \label{fig:scalar_grad_vs_grid}
\end{figure}


\vspace{-3mm}
\paragraph{Matrix regularization}
Our approach demonstrates its value in the context of matrix regularization, where grid search is incapable of identifying the optimal solution within a reasonable amount of time. As depicted in \Cref{fig:matrix_reg}, leveraging matrix regularization with appropriately tuned parameters enhances the value of the bilevel optimization problem.
Furthermore, as demonstrated in \Cref{fig:matrix_visu}, our method successfully modifies each entry $\Lambda_{kl}$ of the regularization matrix, resulting in an estimated matrix $\widehat\Thetab(\Lb^\mathrm{opt})$ that aligns visually with the oracle $\Thetab_\mathrm{true}$.
The edge brought by this improvement remains to be further investigated with respect to the computational cost of the method.
While tuning the step-size, we observed that the non-convexity in this case appears to be more severe.
We speculate that utilizing more sophisticated first-order descent algorithms from the non-convex optimization literature could be more robust than plain gradient descent.

\begin{figure}[t]
  \centering
  \includegraphics[width=\linewidth]{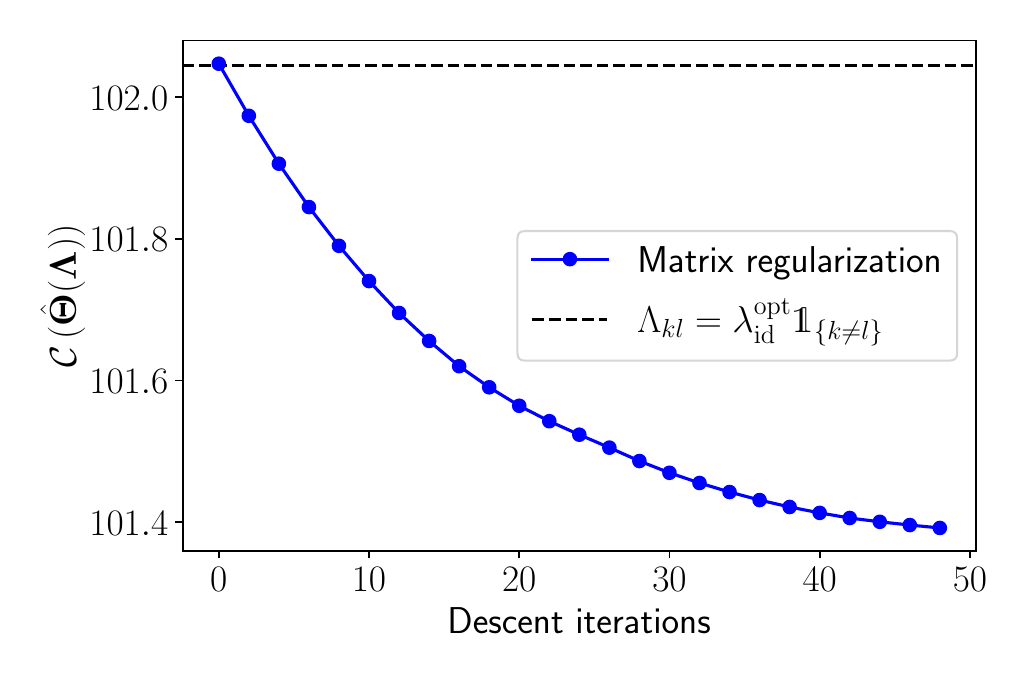}
  \vspace*{-3mm}
  \caption{Outer objective value for the bilevel problem along iterations of hypergradient descent.}
  \label{fig:matrix_reg}
\end{figure}

\begin{figure}[t]
  \centering
  \includegraphics[width=\linewidth]{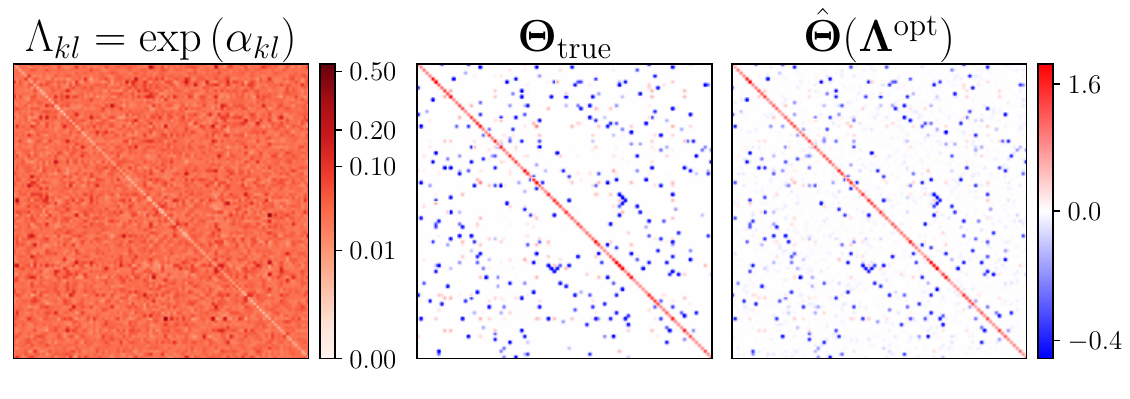}
  \caption{Visualization of the matrices $\Lb^\mathrm{opt}$, $\Thetab_\mathrm{true}$ and $\widehat\Thetab(\Lambda^\mathrm{opt})$.}
  \label{fig:matrix_visu}
  \vspace{-5mm}
\end{figure}


\section*{Conclusion}

In this work, we have proposed a first-order hyperparameter optimization scheme based on implicit differentiation for automatically tuning the GLASSO estimator.
We exploited the sparse structure of the estimated precision matrix for an efficient computation of the Jacobian of the function mapping the hyperparameter to the solution of the GLASSO.
We then proposed an extension of the single regularization parameter case to element-wise (matrix) regularization. As future directions of research, we plan on studying the influence of the criterion $\cC$ on the sparsity of the recovered matrix, as well as clever stepsize tuning strategies for the hypergradient descent.
In the broader sense, we will also benchmark our method against data-based approaches to hyperparameter optimization such as deep unrolling \citep{shrivastava2019glad}.
Finally, we provide high-quality code available freely on GitHub\footnote{\url{https://github.com/Perceptronium/glasso-ho}} for the reproducibility of our experiments.





\setlength{\bibsep}{2pt}
{\footnotesize
  \bibliography{./references.bib}

\begin{thebibliography}{10}

\bibitem{banerjee2008model}
O.~Banerjee, L.~El~Ghaoui, and A.~d'Aspremont.
\newblock Model selection through sparse maximum likelihood estimation for
  multivariate {Gaussian} or binary data.
\newblock {\em {JMLR}}, 2008.

\bibitem{beck2017first}
Amir Beck.
\newblock {\em First-order methods in optimization}.
\newblock SIAM, 2017.

\bibitem{benfenati2020proximal}
A.~Benfenati, E.~Chouzenoux, and J-C. Pesquet.
\newblock Proximal approaches for matrix optimization problems: Application to
  robust precision matrix estimation.
\newblock {\em Signal Processing}, 2020.

\bibitem{bengio2000gradient}
Y.~Bengio.
\newblock Gradient-based optimization of hyperparameters.
\newblock {\em Neural computation}, 2000.

\bibitem{bergstra2012random}
J.~Bergstra and Y.~Bengio.
\newblock Random search for hyper-parameter optimization.
\newblock {\em {JMLR}}, 2012.

\bibitem{bertrand2022implicit}
Q.~Bertrand, Q.~Klopfenstein, M.~Massias, M.~Blondel, S.~Vaiter, A.~Gramfort,
  and J.~Salmon.
\newblock Implicit differentiation for fast hyperparameter selection in
  non-smooth convex learning.
\newblock {\em JMLR}, 2022.

\bibitem{boyd2004convex}
S.~Boyd and L.~Vandenberghe.
\newblock {\em {Convex} optimization}.
\newblock 2004.

\bibitem{combettes2005signal}
P.~Combettes and V.~Wajs.
\newblock Signal recovery by proximal forward-backward splitting.
\newblock {\em Multiscale modeling \& simulation}, 2005.

\bibitem{deledalle2014stein}
C.-A. Deledalle, S.~Vaiter, J.~Fadili, and G.~Peyr{\'e}.
\newblock {Stein Unbiased GrAdient estimator of the Risk (SUGAR) for multiple
  parameter selection}.
\newblock {\em SIAM Journal on Imaging Sciences}, 2014.

\bibitem{franceschi2017forward}
L.~Franceschi, M.~Donini, P.~Frasconi, and M.~Pontil.
\newblock Forward and reverse gradient-based hyperparameter optimization.
\newblock {\em ICML}, 2017.

\bibitem{friedman2008sparse}
J.~Friedman, T.~Hastie, and R.~Tibshirani.
\newblock Sparse inverse covariance estimation with the graphical lasso.
\newblock {\em Biostatistics}, 2008.

\bibitem{hsieh2014quic}
C.-J. Hsieh, M.~Sustik, I.~Dhillon, P.~Ravikumar, et~al.
\newblock {QUIC}: quadratic approximation for sparse inverse covariance
  estimation.
\newblock {\em {JMLR}}, 2014.

\bibitem{laska_narayan_2017_830033}
J.~Laska and M.~Narayan.
\newblock {skggm 0.2.7: A scikit-learn compatible package for Gaussian and
  related Graphical Models}, 2017.

\bibitem{liang2014local}
J.~Liang, J.~Fadili, and G.~Peyr{\'e}.
\newblock Local linear convergence of forward--backward under partial
  smoothness.
\newblock In {\em {NeuRIPS}}, 2014.

\bibitem{oztoprak2012newton}
F.~Oztoprak, J.~Nocedal, S.~Rennie, and P.~A. Olsen.
\newblock Newton-like methods for sparse inverse covariance estimation.
\newblock {\em NeurIPS}, 2012.

\bibitem{shrivastava2019glad}
H.~Shrivastava, X.~Chen, B.~Chen, G.~Lan, S.~Aluru, H.~Liu, and L.~Song.
\newblock {GLAD}: Learning sparse graph recovery.
\newblock {\em ICLR}, 2020.

\bibitem{snoek2012practical}
J.~Snoek, H.~Larochelle, and R.~P. Adams.
\newblock Practical bayesian optimization of machine learning algorithms.
\newblock {\em NeurIPS}, 2012.

\bibitem{van2019generalized}
W.~N. van Wieringen.
\newblock The generalized ridge estimator of the inverse covariance matrix.
\newblock {\em Journal of Computational and Graphical Statistics}, 2019.

\end{thebibliography}
}

\end{document}